%% file: main_arxiv_v4.tex
\pdfoutput=1  
\documentclass[11pt]{article}

\PassOptionsToPackage{numbers, compress}{natbib}

\usepackage[utf8]{inputenc}
\usepackage[T1]{fontenc}
\usepackage[margin=1in]{geometry}
\usepackage{amsmath,amssymb,amsfonts,amsthm}
\usepackage{booktabs}
\usepackage{array}
\usepackage{tabularx}
\usepackage{multirow}
\usepackage{microtype}
\usepackage{graphicx}
\usepackage{xcolor}
\usepackage{paralist}
\usepackage{enumitem}
\usepackage{url}
\usepackage{hyperref}
\usepackage{nicefrac}
\usepackage{mathtools}
\usepackage{caption}
\usepackage{authblk}
\usepackage{natbib}

\hypersetup{colorlinks=true, linkcolor=blue, citecolor=blue, urlcolor=blue}

\theoremstyle{plain}
\newtheorem{theorem}{Theorem}
\newtheorem{proposition}{Proposition}
\newtheorem{lemma}{Lemma}
\theoremstyle{definition}

\theoremstyle{remark}

\newcolumntype{P}[1]{>{\centering\arraybackslash}p{#1}}

\usepackage{tikz}
\usetikzlibrary{positioning, arrows.meta, calc, fit, backgrounds}

\definecolor{cblue}{rgb}{0.88,0.92,1.00}
\definecolor{camber}{rgb}{1.00,0.93,0.83}
\definecolor{cgreen}{rgb}{0.86,0.97,0.86}
\definecolor{cpurple}{rgb}{0.93,0.88,1.00}
\definecolor{dblue}{rgb}{0.25,0.40,0.75}
\definecolor{damber}{rgb}{0.80,0.55,0.15}
\definecolor{dgreen}{rgb}{0.25,0.55,0.30}
\definecolor{dpurple}{rgb}{0.45,0.35,0.70}

\newcommand{\R}{\mathbb{R}}
\newcommand{\E}{\mathbb{E}}
\newcommand{\Var}{\operatorname{Var}}

\newcommand{\norm}[1]{\left\lVert #1 \right\rVert}
\newcommand{\ip}[2]{\left\langle #1,#2\right\rangle}
\newcommand{\Span}{\operatorname{span}}

\newcommand{\task}[1]{\texttt{#1}}
\DeclareMathOperator{\Krylov}{\mathcal{K}}

\title{Recoverable but Not Stationary:\\Local Linear Structures in Weights and Activations}

\author[${}$]{Irina Piontkovskaia}

\author[1,2]{Sergey Nikolenko}

\affil[1]{St. Petersburg Department of the Steklov Institute of Mathematics, St. Petersburg, Russia, \texttt{sergey@logic.pdmi.ras.ru}}
\affil[2]{St. Petersburg State University}

\date{\today}

\begin{document}

\maketitle

\begin{abstract}
Task vectors, LoRA, activation steering, and random search around pretrained weights all suggest that learned behaviour can be controlled by linear directions. We ask which linear structures actually exist and on what scale. In a synthetic multitask transformer and LoRA adapters on DistilGPT-2 / GPT-2 we find strong \emph{local} low-rank task-gradient structure but reject the fixed-task-plane hypothesis: static bases miss the recovery direction, and the useful basis drifts substantially within $100$ steps. However, the first recovery updates form a \emph{trajectory-prefix} basis capturing $77\%$ of the LoRA recovery displacement. We develop random search theory with a Gaussian local-linear theorem that justifies the effectiveness of random parameter search even in very high dimensions. We also study the relation between parameter perturbations and activation steering: a single gradient step produces an activation shift with $0.58$ cosine to a labelled-contrast CAA steering vector, with a similar steering effect on Qwen-$0.5$B BoolQ statements. We validate our results with experiments on synthetic Transformers and LLMs. Our results suggest that linear structures in trained networks are not global task directions, but evolving local geometries that partially persist across parameter and activation spaces.
\end{abstract}

\section{Introduction}
\label{sec:intro}

Much of recent work on editing models after training rests on one geometric assumption: that the behaviour a network has learned for a task can be pinned to a single linear direction, either in weight space or in activation space, and that moving along that direction lets us add, remove, or amplify the behaviour without retraining. Approaches such as task vectors~\citep{ilharco2023editing}, model soups~\citep{wortsman2022modelsoups}, and TIES-merging~\citep{yadav2023tiesmerging} treat fine-tuning updates as vectors that can be added and subtracted in parameter space. LoRA~\citep{hu2022lora} restricts adaptation to a low-rank slice of the weight matrices. Activation addition~\citep{turner2023actadd}, function vectors~\citep{todd2024functionvectors}, representation engineering~\citep{zou2023repe}, and ReFT~\citep{wu2024reft} move along linear directions in hidden-state space instead. These methods look different on the surface, but they share the same underlying claim: the relevant structure is low-dimensional and locally linear. A natural question arises: what kind of linear structure actually exists in a trained network? Where do they live, and how far one can move along before they stop being linear?

A more provocative version of the same question comes from recent work on \emph{random search around pretrained weights}. \citet{gan2026neural} argue that a sufficiently pretrained model is surrounded by a dense ``thicket'' of task-improving directions that can be found simply by sampling random parameter perturbations, keeping the ones that help, and combining them. This is a counterintuitive result: if useful directions were rare needles in a $10^9$-dimensional haystack, random sampling should never find them.

The starting point for this work was that random search apparently \emph{does} find useful directions. Naturally, random search is not an optimizer competing with gradient descent, gradient descent always wins on raw accuracy. But we can use it as a {measuring tool}: if random perturbations drawn inside one subspace reliably beat perturbations drawn inside another, at the same step size, that can tell us which subspace is better aligned with the task.

To study these subspaces in practice, we first run experiments on synthetic models in a \emph{recovery after forgetting} regime. To study local geometry, we need a known, working displacement to compare candidate subspaces against. We obtain one as follows: first train a small network on a mixture of tasks, reaching a \emph{multitask checkpoint} $\theta_{\rm mt}$, and then deliberately overfitting it to a single task, running single-task training long enough that one of the other tasks degrades; we call the result the \emph{forgotten checkpoint} $\theta_{\rm forg}$. Finally, we retrain briefly on the original mixture, which brings the lost task back; call this $\theta_{\rm rec}$. As a result, the vector $\Delta_{\rm GD}=\theta_{\rm rec}-\theta_{\rm forg}$ becomes a task-specific adaptation that gradient descent has produced, and we can then study small subspaces in terms of how much of $\Delta_{\rm GD}$ they contain. In this work, ``recovery'' always refers to this loop. Note that nothing in our theoretical results is specific to forgetting, and the same results could apply to any small gradient-based adaptation around a checkpoint, but the forgetting setup has a clean target $\Delta_{\rm GD}$ to measure against.

We use three ways to score random search results in this setup. We draw $N$ candidate perturbations, evaluate them on a held-out split, and keep the best. \emph{Best-of-$N$} reports the single best candidate's gain, and this is the quantity used in our theoretical results. \emph{Ensemble-$k$-of-$N$} combines the outputs of the top $k$ candidates by majority vote and reports the ensemble's gain. \emph{Pass@$N$} reports the fraction of test items solved by {at least one} kept candidate, which shows an upper bound that a perfect selector could reach.

\begin{figure}[!t]
\centering
\setlength{\tabcolsep}{3pt}\footnotesize
\begin{tabular}{P{.3\linewidth}P{.32\linewidth}P{.32\linewidth}}
\multicolumn{3}{c}{\includegraphics[width=\linewidth]{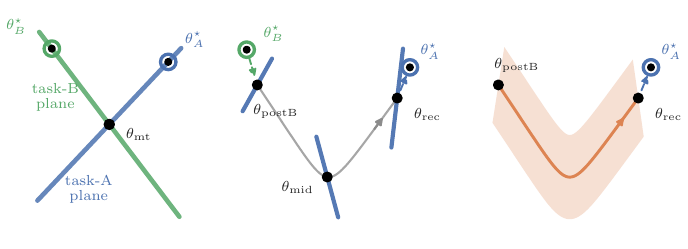}} \\[-2pt]
{(a) Static-plane hypothesis} & {(b) Observed: the local plane drifts} & {(c) Proposed: a moving trajectory bundle} \\
\end{tabular}
\caption{From a static task plane to a moving trajectory: (a)~the natural hypothesis is that each task corresponds to a fixed subspace through $\theta_{\rm mt}$ and recovery moves inside it; (b)~we observe that the local task plane drifts substantially along the recovery path; (c)~we propose a low-dimensional bundle that follows the trajectory whose tangent at each point lies in the local Krylov subspace.}
\label{fig:schematic}
\end{figure}

We find that the recovery direction is real, low-dimensional, and useful, but it is \emph{not} contained in any fixed task plane. The task-gradient subspace measured at the forgotten checkpoint captures only a modest fraction of the recovery displacement, and the subspace measured before forgetting captures almost none; moreover, the subspace itself drifts as recovery proceeds. What \emph{does} work is the trajectory itself: a basis built from the first several recovery steps captures most of the displacement and is the best subspace to search inside. So the right object is not a static plane through a checkpoint but a short, moving segment of the optimization path: low-dimensional but not stationary. At LLM scale, we find the same qualitative pattern with important caveats: the low-rank reading remains clean on our small controlled models and on LoRA adapters but is only partially confirmed on billion-parameter models, where naive rank estimates are hard to separate from sampling noise. Figure~\ref{fig:schematic} illustrates this main idea.

The most counterintuitive result is that random search works at all in a $10^9$-dimensional space, and we explain it with a theoretical result. When we sample an isotropic Gaussian perturbation, the amount of a fixed useful direction $a$ that it picks up has variance $\sigma^2\norm{a}^2$, which does not depend on the number of parameters at all. Best-of-$N$ then amplifies this by a $\sqrt{\log N}$ factor, so the target signal grows with the search budget and is independent of dimension, while damage to an unrelated, orthogonal direction stays flat in $N$. Thus, a budgeted random search helps the targeted task much more than it hurts the others; below, we formalize this result in a theorem. Note that this only holds when the loss is roughly linear in the perturbation, and we locate that regime empirically (e.g., around a perturbation scale of $10^{-4}$ on Qwen2.5-0.5B).

Weight edits and activation steering are usually treated as separate tools. There is a connection between them: a weight perturbation $\delta\theta$ shifts each layer's activations by $D_\theta h_\ell\,\delta\theta$, the pushforward of the weight move into activation space. If $\delta\theta$ helped the task, its activation counterpart (``shadow'') is a candidate steering vector. We have tested $\delta\theta$ on a single gradient step on Qwen2.5-0.5B, and the resulting activation shift has cosine similarity around $0.58$ to a contrastive (true-vs-false) steering vector built independently, peaks at late decoder layers, and recovers accuracy when injected back. These results are still preliminary, but they show that the formal connection can be made empirical.

Finally, we note some limitations of our results. Although random search can, in some of our experiments, even outperform gradient descent on the target task, it is unstable and its behaviour on the other tasks is hard to guarantee, so it is in no way a replacement for gradient descent. We see our results instead as a starting point for a serious analysis of this phenomenon, one that may eventually yield a sound methodology and a clearer understanding of when it applies. The same holds for activation steering, where our results vary across layers and across the direction of the perturbation. Even so, our random search theory applies just as well to the search for steering directions, so unifying the two into a single account may be an interesting direction for future work.

The remainder of the paper is organized as follows. Section~\ref{sec:related} reviews related work, and Section~\ref{sec:setup} fixes notation and defines the geometric objects we use. Section~\ref{sec:theory} proves the three results the paper rests on: the best-of-$N$ theorem, a norm-matched companion explaining why padding a subspace hurts, and a short lemma showing that the recovery displacement lives in a Krylov subspace rather than along any fixed direction; it closes with the pushforward identity. Section~\ref{sec:synthetic_lora} runs the controlled synthetic Transformer; Section~\ref{sec:lora_validation} repeats the probes on LoRA adapters over DistilGPT-2 and GPT-2. Section~\ref{sec:steering} scales up to Qwen and OLMo: it asks whether the low-rank picture survives, shows that budgeted random search finds improvements in the regime the theorem predicts, and connects weight perturbations to activation steering. Section~\ref{sec:discussion} discusses limitations and Section~\ref{sec:conclusion} concludes the paper. Detailed per-run tables and the more diagnostic experiments are collected in the Appendix.

\section{Related Work}
\label{sec:related}

\paragraph*{Intrinsic dimension and weight-space connectivity.}
\citet{li2018intrinsic} train neural networks in low-dimensional random subspaces and find that many objectives can be solved with far fewer dimensions than the parameter count suggests. Mode connectivity work~\citep{garipov2018losssurface, draxler2018essentially} shows that independently trained networks can be connected by simple low-loss curves, exposing further low-dimensional structure in weight space. Loss landscape and ensemble studies~\citep{fort2019deep,li2018visualizing} extend the picture. We add a temporal axis to this story: the relevant low-dimensional structure for recovery after forgetting is not the static intrinsic basis but a moving trajectory-aligned bundle.

\paragraph*{Low-rank adaptation.}
LoRA~\citep{hu2022lora} freezes pretrained weights and learns rank-deficient adapter updates, motivated by the conjecture that adaptation has low intrinsic rank. Adapter-tuning and parameter-efficient methods~\citep{aghajanyan2021intrinsic,houlsby2019parameter} relate. Our LoRA experiments place the trajectory-bundle / Krylov-prefix story inside this standard framework: the LoRA recovery displacement \emph{is} low-rank, but its low-rank object is empirical Krylov, not the static task subspace at any single checkpoint.

\paragraph*{Task vectors and model merging.}
Task arithmetic~\citep{ilharco2023editing} treats fine-tuning deltas as composable vectors. Model souping~\citep{wortsman2022modelsoups} averages fine-tuned weights to improve accuracy and robustness. TIES-merging~\citep{yadav2023tiesmerging} resolves interference between task-specific deltas. These methods presuppose that fine-tuning deltas behave additively, but our results qualify this picture.

\paragraph*{Activation-space linear structures.}
Activation Addition~\citep{turner2023actadd} constructs steering vectors from contrastive activation differences. Function vectors~\citep{todd2024functionvectors} find compact hidden-state representations of in-context functions. Representation engineering~\citep{zou2023repe} frames activation-space directions as a general approach to monitoring and controlling high-level behaviour. ReFT/LoReFT~\citep{wu2024reft} learns low-rank interventions in representation space rather than adapting weights. We provide a partial bridge: weight-space perturbations sometimes have reusable activation-space counterparts.

\paragraph*{Multitask gradient geometry.}
PCGrad~\citep{yu2020gradient}, MGDA-style methods~\citep{sener2018multi}, GradNorm~\citep{chen2018gradnorm}, CAGrad~\citep{liu2021cagrad} explicitly reason about per-task gradient interactions. Catastrophic forgetting and continual learning regularizers~\citep{kirkpatrick2017overcoming, lopezpaz2017gradient} bound how much a current update may move along old task subspaces. Instead, we do temporal diagnostics, asking how task-gradient subspaces {evolve} through pretraining, single-task continuation, forgetting, and recovery.

\paragraph*{Random search and neural thickets.}
\citet{gan2026neural} argue that sufficiently pretrained large models are surrounded by dense and diverse task-improving specialists, recoverable by random parameter perturbation, top-$K$ selection, and ensembling. Earlier evolutionary strategy and zeroth-order optimization work~\citep{malladi2023mezo,salimans2017evolution} established that random search at the right scale can move LLMs in useful directions. We treat random search not as an optimizer to compete with gradients but as a probe of local geometry: the \emph{order} of proposal-family curves at matched budget is informative about the underlying anisotropy.

All these prior directions have the same linear structure story: low-dimensional directions exist and are relatively easy to find. Our contribution is to ask which of those structures are local vs.\ global, static vs.\ moving, weight-space vs.\ activation-space, and to show empirically that the answer depends on coordinate system and intervention scale.

\section{Notation and definitions}
\label{sec:setup}

In this section, we introduce the definitions and notation that the rest of the paper relies on. A few of these objects are not standard, so we define all of them here and use throughout the paper.

\paragraph{Parameter space and loss.}
Let $\theta\in\R^D$ be the (possibly adapter-restricted) parameters of a network, and let a \emph{task} $\tau$ have loss $L_\tau(\theta)$. We write $g_\tau(\theta)=\nabla_\theta L_\tau(\theta)$ and drop $\tau$ when the task is clear. Near a checkpoint $\theta$ we consider the first order approximation
$$L(\theta+\delta)-L(\theta)\approx a^\top\delta,$$ where $a=-\nabla_\theta L(\theta)$ is the direction that most improves the task; we call $a$ the \emph{useful direction}. Empirically $a$ lives in a low-dimensional subspace, but the algorithms we study never know that subspace.

\paragraph{Recovery after forgetting.}
Our experimental loop has three phases:
\begin{enumerate}[label=(\arabic*)]
  \item \emph{multitask training} to a checkpoint $\theta_{\rm mt}$ on an equal mixture of $T$ tasks;
  \item \emph{single-task continuation (forgetting):} continue training on one task only, reaching $\theta_{\rm forg}$, at which one or more of the other tasks has degraded;
  \item \emph{recovery:} from $\theta_{\rm forg}$, take a small number of gradient steps on the original mixture (we use $100$ for the synthetic model, $50$ for LoRA), reaching $\theta_{\rm rec}$.
\end{enumerate}

The \emph{recovery displacement} is defined as
\[
\Delta_{\rm GD}\;=\;\theta_{\rm rec}-\theta_{\rm forg},
\]
i.e., the ground-truth gradient-based adaptation against which candidate subspaces are compared.

\paragraph{Candidate subspaces.}
We distinguish the \emph{useful direction} $a$ (a property of the loss, which the sampler does not know) from a \emph{candidate subspace} $U\subseteq\R^D$ of dimension $r$ that we choose, in order to test how well it aligns with $a$ or with $\Delta_{\rm GD}$. We use three families of candidate subspaces:
\begin{itemize}
\item \emph{local task subspace} $U_\tau^{(r)}(\theta)$: the span of the top-$r$ eigenvectors of the per-task gradient covariance $\tfrac1m\sum_i g_\tau(\theta;b_i)g_\tau(\theta;b_i)^\top$ estimated from $m$ minibatches at checkpoint $\theta$; we write \emph{Task (local)} for $\theta=\theta_{\rm forg}$ and \emph{Task (pre-int.)} (before interference) for $\theta=\theta_{\rm mt}$; this is the natural ``task plane'';
\item \emph{trajectory prefix} $U_{\rm traj}^{(k)}$: the span of the first $k$ recovery step vectors $\theta_1-\theta_0,\dots,\theta_k-\theta_{k-1}$ along the recovery path; by construction, $\Delta_{\rm GD}$ lies in the full trajectory span, and a short prefix may capture most of it;
\item \emph{explicit Krylov} $\Span\{g,Hg,H^2g,H^3g\}$ at $\theta_{\rm forg}$, built from finite-difference Hessian-vector products; we use Krylov subspaces as a curvature diagnostic, not as a working basis.
\end{itemize}

\paragraph{Random search and its three scores.}
At a checkpoint $\theta$ with task score $S=-L_\tau$, we sample $N$ perturbations, evaluate $S(\theta+\delta_i)$ on a selection split, and keep the top $k$. The \emph{native} sampler is isotropic Gaussian, $\delta\sim\mathcal N(0,\sigma^2 I)$. As a diagnostic we also restrict sampling to a candidate subspace $U$, either Gaussian in $U$, $\delta\sim\mathcal N(0,\sigma^2 P_U)$, or norm-matched on the sphere of $U$. By comparing scores across choices of $U$ we can rank the subspaces. 

We score the result in three ways, defined here:
\begin{itemize}
\item \emph{best-of-$N$}: the gain of the single best candidate; this is the quantity Theorem~\ref{thm:gaussian_search} describes;
\item \emph{ens-$k$-of-$N$}: the gain of the majority-vote ensemble of the top $k$ candidates;
\item \emph{pass@$N$}: the fraction of report-split items solved by at least one kept candidate (an upper envelope).
\end{itemize}

\paragraph{Activation steering and CAA.}
Write $h_\ell(x;\theta)\in\R^d$ for the residual-stream activation at layer $\ell$. \emph{Activation steering} replaces $h_\ell$ by $h_\ell+\alpha v_\ell$ for a \emph{steering vector} $v_\ell$ and strength $\alpha$. The classical construction is \emph{centroid-based activation addition} (CAA): take labelled paired prompts (e.g.\ true vs.\ false statements) and let $v_\ell$ be the mean activation difference between the two classes. The pushforward identity of Section~\ref{sec:theory:pushforward} gives an alternative: $v_\ell=\E_x[D_\theta h_\ell(x;\theta)\,\delta\theta]$, the example-averaged activation shift induced by a weight perturbation $\delta\theta$ (a gradient step, or a selected random perturbation).

\section{Theory: Local Linear Structure}
\label{sec:theory}

In this section, we prove theoretical results that the experiments then validate and use. First, a best-of-$N$ theorem that explains why isotropic random search picks up useful signal regardless of dimension (Theorem~\ref{thm:gaussian_search}). Second, a norm-matched companion that explains why, when comparing candidate subspaces at a fixed step size, a smaller well-aligned subspace beats a larger one that merely contains the answer (Proposition~\ref{prop:aligned_search}). Third, a short lemma showing the recovery displacement lies in a Krylov subspace, so no single fixed direction can capture it (Lemma~\ref{lem:krylov}). We close with the parameter-to-activation pushforward (Section~\ref{sec:theory:pushforward}).

There is one important notational remark to keep in mind throughout, because it might be the source of confusion: $a$ is the \emph{useful direction}, a fixed property of the loss that the sampler does not know; $U$ is a \emph{candidate subspace} that {we} ourselves choose in order to test it. The native search samples isotropically and never references $U$; the diagnostic search samples inside a chosen $U$. The theorem is about the native sampler and a fixed $a$; the proposition is about the diagnostic sampler and a chosen $U$.

\subsection{Best-of-\texorpdfstring{$N$}{N} random search picks up dimension-independent signal}
\label{sec:theory:gaussian}

\begin{theorem}[Gaussian best-of-$N$ in a local-linear neighbourhood]
\label{thm:gaussian_search}
Consider a checkpoint $\theta$ and its neighborhood where the score $S=-L_\tau$ satisfies $S(\theta+\delta)-S(\theta)=a^\top\delta+\mathcal O(\norm\delta^2)$ for a fixed useful direction $a\in\R^D$. Sample $\delta_1,\dots,\delta_N\sim\mathcal N(0,\sigma^2 I)$ and select $\delta_{n^\star}=\arg\max_n a^\top\delta_n$. Then:
\begin{enumerate}[label=(\arabic*)]
\item each projection $a^\top\delta_n$ is centred Gaussian with variance $\sigma^2\norm{a}^2$, \emph{independent of the ambient dimension $D$};
\item the expected best-of-$N$ gain is $\E[\max_{n\le N}a^\top\delta_n]=\sigma\norm{a}\sqrt{2\log N}\,(1+o(1))$, growing in the budget but not in $D$;
\item for any $b\perp a$ (an unrelated side-task direction), the selected sample has $\E[b^\top\delta_{n^\star}]=0$ and typical magnitude $|b^\top\delta_{n^\star}|=\mathcal O(\sigma\norm b)$, \emph{independent of $N$}.
\end{enumerate}
\end{theorem}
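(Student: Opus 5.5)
The plan is to reduce everything to statements about one-dimensional Gaussians, using rotational invariance of $\mathcal N(0,\sigma^2 I)$. Part (1) is immediate: a linear functional of a Gaussian vector is Gaussian. Here $a^\top\delta_n$ has mean $0$ and variance $a^\top(\sigma^2 I)a=\sigma^2\norm a^2$, and no trace of $I$ or any other $D$-dependent quantity appears. I will record the standardized variables $Z_n=a^\top\delta_n/(\sigma\norm a)$, which are i.i.d.\ $\mathcal N(0,1)$ across $n$ because the $\delta_n$ are independent (assuming $a\neq 0$; the case $a=0$ is trivial).

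For part (2), I write $\max_n a^\top\delta_n=\sigma\norm a\max_n Z_n$, so it suffices to prove $\E[\max_{n\le N}Z_n]=\sqrt{2\log N}(1+o(1))$ for i.i.d.\ standard normals.

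\begin{itemize}
\item \textbf{Upper bound.} I will use the standard Jensen/MGF argument. For any $\lambda>0$,
\[
e^{\lambda\E\max Z_n}\le\E e^{\lambda\max Z_n}\le N e^{\lambda^2/2}.
\]
Choosing $\lambda=\sqrt{2\log N}$ gives $\E\max Z_n\le\sqrt{2\log N}$.
\item \textbf{Lower bound.} Fix $\varepsilon>0$ and set $t=(1-\varepsilon)\sqrt{2\log N}$. The Mills-ratio lower bound $\Pr(Z>t)\ge \frac{t}{1+t^2}\phi(t)$ gives $N\Pr(Z>t)\to\infty$ polynomially in $N$. Hence
\[
\Pr(\max Z_n\le t)=(1-\Pr(Z>t))^N\le\exp(-N\Pr(Z>t))\to 0.
\]
I then control the negative part of $\max Z_n$ by $\E|Z_1|$ times a vanishing probability, or simply use $\max Z_n\ge Z_1$ on the bad event. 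Together these give $\E\max Z_n\ge t(1-o(1))-O(1)\cdot o(1)$, and letting $\varepsilon\to0$ yields the asymptotic.
\end{itemize}

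This lower-bound bookkeeping is the only mildly technical step. I expect it to be the main obstacle, though it is classical extreme-value material.

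For part (3), I decompose each $\delta_n$ along $\hat a=a/\norm a$ and $\hat b=b/\norm b$. Since $b\perp a$, the coordinates $W_n=b^\top\delta_n$ and $Z_n$ are jointly Gaussian with zero covariance $\sigma^2 a^\top b=0$, hence independent. The index $n^\star$ is a measurable function of $(Z_1,\dots,Z_N)$ alone, so it is independent of $(W_1,\dots,W_N)$. Conditioning on $n^\star$ shows that $b^\top\delta_{n^\star}$ has exactly the law of $W_1\sim\mathcal N(0,\sigma^2\norm b^2)$. This gives mean zero and $\E|b^\top\delta_{n^\star}|=\sigma\norm b\sqrt{2/\pi}$, with Gaussian tails, all independent of $N$ and $D$.

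Finally, I will remark on how this connects to the score $S$ through the local-linear hypothesis. The true gain is $a^\top\delta_{n^\star}+\mathcal O(\norm{\delta_{n^\star}}^2)$, and the statement is made for the linear part. Selecting by $a^\top\delta$, as stated, avoids needing to control how the remainder interacts with the argmax.
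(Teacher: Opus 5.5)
Your proposal is correct and follows the same argument as the paper: part (1) by linearity of Gaussians, part (2) by standardizing to i.i.d.\ $\mathcal N(0,1)$ variables, and part (3) by observing that $a^\top\delta_n$ and $b^\top\delta_n$ are uncorrelated jointly Gaussian, hence independent, so the selection index (a function of the $a$-projections alone) leaves $b^\top\delta_{n^\star}\sim\mathcal N(0,\sigma^2\norm b^2)$. The only difference is that the paper cites the classical $\sqrt{2\log N}\,(1+o(1))$ extreme-value asymptotic, whereas you prove it via the MGF upper bound, the Mills-ratio lower bound, and the negative-part bound on the event $\{\max_n Z_n\le t\}$; that derivation is sound and makes the argument self-contained.
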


\begin{proof}
For $\delta\sim\mathcal N(0,\sigma^2 I)$, $a^\top\delta$ is centred Gaussian with $\Var(a^\top\delta)=\sigma^2\norm a^2$, giving part 1. Writing $X_n=a^\top\delta_n/(\sigma\norm a)\sim\mathcal N(0,1)$ i.i.d., the classical extreme-value result $\E\max_{n\le N}X_n=\sqrt{2\log N}(1+o(1))$ gives part 2. For part 3, $(a^\top\delta,b^\top\delta)$ is jointly Gaussian with covariance $\sigma^2\ip a b$; if $\ip a b=0$ the two are independent, so conditioning on the selection event (a function of $a^\top\delta$ only) leaves $b^\top\delta_{n^\star}\sim\mathcal N(0,\sigma^2\norm b^2)$. Sub-Gaussian concentration gives $|b^\top\delta_{n^\star}|=\Theta(\sigma\norm b)$, independent of $N$. (For weakly correlated $a,b$ with $\rho=\ip a b/(\norm a\norm b)$ the side movement gains a $|\rho|\sqrt{\log N}$ term, still flat once $|\rho|\sqrt{\log N}\ll1$.) 
\end{proof}

The first part of Theorem~\ref{thm:gaussian_search} is the main result: even though the sampler spreads its variance over all $D$ coordinates, the amount of the useful direction $a$ in a single draw has variance $\sigma^2\norm a^2$, which does not shrink with $D$. This means that random search is not hopeless at LLM scale: the per-sample signal is the same for all values of $D$, even very large ones. The second part says that best-of-$N$ amplifies that signal by the usual Gaussian $\sqrt{2\log N}$. The third part is the asymmetry we care about: selecting for $a$ does not, on average, move an orthogonal direction $b$ at all, and even the typical accidental movement stays flat as the budget grows. Target gains grow with $N$, while collateral damage does not. 

This is somewhat counterintuitive since a random vector is nearly orthogonal to $a$ in high dimensions, so it naively looks like finding an aligned one should take exponentially many tries. That is indeed true of the {angle}, which concentrates near $90^\circ$. But the angle is the wrong quantity: the projection $a^\top\delta$ is the product of a vanishing cosine and a $\sqrt D$-growing length, and the two cancel to leave a $D$-independent $\Theta(\sigma\norm a)$. Best-of-$N$ then extracts $\Theta(\sigma\norm a\sqrt{\log N})$ regardless of $D$.

If the useful directions span a subspace of dimension $r_\star>1$ rather than a single $a$, the same statement holds coordinate-by-coordinate: the per-sample signal is governed by $\norm{P_\star a}$ where $P_\star$ projects onto the useful subspace, and again does not involve $D$. We state the theorem for a single $a$ for clarity; the experiments measure effective ranks $r_\star$ between $2$ and $\sim$$12$ in models from $\sim$$5\!\cdot\!10^5$ to $\sim$$5\!\cdot\!10^8$ parameters.

\subsection{Why a smaller, well-aligned subspace beats a larger one}
\label{sec:theory:normmatched}

Theorem~\ref{thm:gaussian_search} might suggest that, when comparing candidate subspaces, the one that contains the most of $a$ should win. However, it turns out to be wrong at fixed step size: padding a candidate subspace $U$ with directions orthogonal to $a$ leaves the contained signal unchanged but dilutes it across more dimensions.

\begin{proposition}[Matched-norm search prefers density, not mass]
\label{prop:aligned_search}
Restrict sampling to a candidate subspace $U$ of dimension $r$ by drawing $\delta=\rho u$ with $u$ uniform on the unit sphere of $U$ (fixed $\rho$). Then
$$\E[a^\top\delta]=0\qquad\text{and}\qquad\Var(a^\top\delta)=\rho^2\norm{P_U a}^2/r.$$
Therefore, the per-dimension search signal scales as $\norm{P_U a}/\sqrt r$: adding to $U$ directions orthogonal to $a$ raises $r$ without raising $\norm{P_U a}$, and so {lowers} the signal.
\end{proposition}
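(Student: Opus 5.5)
The plan is to reduce everything to the second-moment structure of the uniform distribution on the unit sphere of $U$. Fix an orthonormal basis $Q\in\R^{D\times r}$ of $U$, so that $P_U=QQ^\top$ and $u=Qz$ with $z$ uniform on the unit sphere $S^{r-1}\subset\R^r$. Then
\[
a^\top\delta=\rho\,a^\top Qz=\rho\,c^\top z,\qquad c:=Q^\top a\in\R^r .
\]
Since $Q$ has orthonormal columns, $\norm c=\norm{QQ^\top a}=\norm{P_U a}$. This removes the ambient dimension $D$ entirely: only the $r$-dimensional vector $c$ and the distribution of $z$ remain. In particular, the component of $a$ orthogonal to $U$ never enters.

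For the mean, I will use the symmetry $z\mapsto -z$ of the uniform measure on $S^{r-1}$. It gives $\E z=0$, hence $\E[a^\top\delta]=0$. For the variance, I will show $\E[zz^\top]=I_r/r$, which I expect to be the only step needing an argument. Rotation invariance of the uniform measure gives $O\,\E[zz^\top]\,O^\top=\E[zz^\top]$ for every orthogonal $O$, so $\E[zz^\top]$ is a scalar multiple of $I_r$. Taking traces and using $\norm z^2=1$ fixes the scalar at $1/r$. Alternatively, one can write $z=g/\norm g$ with $g\sim\mathcal N(0,I_r)$ and use exchangeability of the coordinates. It then follows that
\[
\Var(a^\top\delta)=\rho^2\,c^\top\E[zz^\top]\,c=\rho^2\norm c^2/r=\rho^2\norm{P_U a}^2/r .
\]

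The ``therefore'' clause is immediate from this formula. If $U$ is enlarged to $U'=U\oplus W$ with $W\perp a$, then $P_{U'}a=P_Ua+P_Wa=P_Ua$, so the numerator is unchanged while $r$ grows. The standard deviation $\rho\norm{P_Ua}/\sqrt r$, which is the per-sample signal, therefore strictly decreases.

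To connect back to Theorem~\ref{thm:gaussian_search}, I would also note a further point. For large $r$, $\sqrt r\,c^\top z/\norm c$ is approximately standard normal. The best-of-$N$ gain inside $U$ is thus roughly $\rho\norm{P_Ua}\sqrt{2\log N/r}$, and it inherits the same $1/\sqrt r$ penalty. This remark is not needed for the stated claim.
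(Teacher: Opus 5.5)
Your proof is correct and follows the paper's route. The paper also uses rotational invariance inside $U$ to get $\E[uu^\top]=P_U/r$, which is your $\E[zz^\top]=I_r/r$ transported by $Q$; your antipodal-symmetry and trace arguments just spell out what the paper states in one line. One precision to keep: the step $P_{U'}a=P_Ua+P_Wa$ needs $W\perp U$ as well as $W\perp a$, so read $\oplus$ as an orthogonal sum. Otherwise a direction orthogonal to $a$ but not to $U$ can increase $\norm{P_{U'}a}$ and even the density: in $\R^2$, take $a=e_1$, $U=\Span\{(\sqrt{0.1},\sqrt{0.9})\}$, $W=\Span\{e_2\}$. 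The paper's own phrasing also leaves this condition implicit.
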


\begin{proof}
For $u$ uniform on the unit sphere of $U$, rotational invariance inside $U$ gives $\E[uu^\top]=(1/r)P_U$, so $\E[(a^\top u)^2]=a^\top\E[uu^\top]a=\norm{P_U a}^2/r$. With $\delta=\rho u$, $\Var(a^\top\delta)=\rho^2\norm{P_U a}^2/r$. Adding directions orthogonal to $a$ leaves $\norm{P_U a}^2$ fixed and increases $r$, lowering $\norm{P_U a}/\sqrt r$.
\end{proof}

Proposition~\ref{prop:aligned_search} means that at fixed step size, the relevant quantity is signal \emph{density} $\norm{P_U a}/\sqrt r$ rather than signal \emph{mass} $\norm{P_U a}^2$. A rank-$32$ subspace that contains $a$ entirely has density $\norm a/\sqrt{32}$, which is worse than a rank-$10$ subspace containing $77\%$ of $a$. This is exactly the ordering we will see in the experiments, and, importantly, it is a prediction for \emph{best-of-$N$}, the metric the theorem describes; ensembling behaves slightly differently, as we discuss in Section~\ref{sec:lora_validation}.

\subsection{The recovery displacement is Krylov, not a fixed direction}
\label{sec:theory:krylov}

Theorem~\ref{thm:gaussian_search} and Proposition~\ref{prop:aligned_search} deal with the properties of {searching} once we know which way to look. The next lemma describes what the recovery direction actually {is}.

\begin{lemma}[Krylov recovery]
\label{lem:krylov}
Suppose $L_\tau$ is quadratic with constant Hessian $H$ over the recovery window, and let $\theta_{t+1}=\theta_t-\eta\nabla L_\tau(\theta_t)$. Then $g_t=(I-\eta H)^t g_0$ and
\[
\theta_T-\theta_0=-\eta\sum_{t=0}^{T-1}g_t\;\in\;\Krylov_T(H,g_0):=\Span\{g_0,Hg_0,\dots,H^{T-1}g_0\}.
\]
A fixed direction $u$ is invariant under one step iff $Hu\parallel u$, and a subspace $U$ is invariant iff $HU\subseteq U$.
\end{lemma}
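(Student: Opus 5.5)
The plan is to exploit the fact that a quadratic has an affine gradient, which turns gradient descent into a linear recursion in $g_t$.

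\emph{Gradient recursion.} Since $L_\tau$ is quadratic with constant Hessian $H$ on the window, we have $\nabla L_\tau(\theta')-\nabla L_\tau(\theta)=H(\theta'-\theta)$ for any two points in it. Applying this to $\theta_{t+1}-\theta_t=-\eta g_t$ gives $g_{t+1}=g_t-\eta H g_t=(I-\eta H)g_t$. Induction on $t$ then yields $g_t=(I-\eta H)^t g_0$.

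\emph{Krylov membership.} Telescoping the update rule gives $\theta_T-\theta_0=\sum_{t<T}(\theta_{t+1}-\theta_t)=-\eta\sum_{t<T}g_t$. Expanding $(I-\eta H)^t$ by the binomial theorem writes $g_t$ as a linear combination of $g_0,Hg_0,\dots,H^t g_0$. For $t\le T-1$ each $g_t$ therefore lies in $\Krylov_T(H,g_0)$, and so does their sum, since $\Krylov_T$ is a linear subspace.

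\emph{Invariance.} Here I read ``invariant under one step'' as invariance of the direction or subspace under the linear map $I-\eta H$, which governs how the gradient, and hence the step, evolves. For a fixed direction, assuming $\eta\neq0$: $(I-\eta H)u\parallel u$ holds iff $u-\eta Hu=cu$, i.e.\ iff $Hu=\frac{1-c}{\eta}u$, i.e.\ iff $Hu\parallel u$. For a subspace $U$: if $HU\subseteq U$, then $(I-\eta H)u=u-\eta Hu\in U$ for every $u\in U$. Conversely, if $(I-\eta H)U\subseteq U$, then $\eta Hu=u-(I-\eta H)u\in U$, so $HU\subseteq U$.

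The main obstacle is only interpretive. One has to fix what ``invariant under one step'' means (invariance under $I-\eta H$ acting on the gradient) and assume $\eta\neq0$. With that reading, a fixed direction captures the whole trajectory only when $g_0$ is an eigenvector of $H$; in general the steps sweep out the full Krylov space, which is the point the lemma is meant to support.
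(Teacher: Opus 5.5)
Your proof is correct and follows the paper's argument: you derive $g_{t+1}=(I-\eta H)g_t$, telescope the updates to get the displacement, and read ``invariant under one step'' as invariance under $I-\eta H$. The only differences are minor. You obtain the recursion from the affine-gradient identity $\nabla L_\tau(\theta')-\nabla L_\tau(\theta)=H(\theta'-\theta)$, whereas the paper writes $L_\tau=\tfrac12(\theta-\theta^\star)^\top H(\theta-\theta^\star)$; your version does not presuppose a stationary point $\theta^\star$, which may not exist when $H$ is singular. You also spell out the binomial expansion and both directions of the invariance equivalences, which the paper only asserts.
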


\begin{proof}
With $L_\tau(\theta)=\tfrac12(\theta-\theta^\star)^\top H(\theta-\theta^\star)$, $g_t=H(\theta_t-\theta^\star)$ and $\theta_{t+1}-\theta^\star=(I-\eta H)(\theta_t-\theta^\star)$, so $g_{t+1}=(I-\eta H)g_t$ and $g_t=(I-\eta H)^t g_0$. The displacement $-\eta\sum_{t<T}g_t$ lies in $\Span\{g_0,\dots,H^{T-1}g_0\}$. Invariance of $u$ under $u\mapsto(I-\eta H)u$ requires $Hu\parallel u$; for a subspace, $HU\subseteq U$.
\end{proof}

Lemma~\ref{lem:krylov} means that recovery displacement lives in a Krylov subspace of dimension at most $T$ (low-dimensional), but a {single fixed direction} captures it only in the degenerate case where the gradient is an eigenvector of the Hessian, and a fixed subspace captures it only if the Hessian maps it into itself. Neither holds for a deep network at a forgotten checkpoint. So the recovery object is low-dimensional but \emph{moving}: higher-order terms $H^kg$ keep introducing new components as the trajectory unrolls.

Note also that the clean statement assumes a \emph{constant} Hessian. For a nonlinear loss the Hessian $H_t$ at successive iterates differs, and the displacement lies in a \emph{generalized} Krylov span $\Span\{g_0,H_0g_0,H_1H_0g_0,\dots\}$ rather than the powers of a single $H$. The single-$H$ lemma is therefore exact for a quadratic and holds to leading order only when the update is small enough that $H_t\approx H$ across the window, which is the correct regime in our short, small-norm recovery with $T\le100$ steps. We use the lemma to explain {why} a fixed direction is too rigid, but it does not let us reconstruct the path. Instead, we rely on the directly observed trajectory prefix, which reliably captures most of the recovery displacement (about $77\%$ on the LoRA adapters; Figure~\ref{fig:projection_mass} and Table~\ref{tab:projected}), and use it as our working proxy for the right Krylov subspace. The explicit finite-difference Krylov basis $\{g, Hg, H^2g, H^3g\}$ enters only as a curvature diagnostic (Appendix~\ref{app:krylov}).

\subsection{Parameter-to-activation pushforward}
\label{sec:theory:pushforward}

To connect weight perturbations to activation steering we need one identity. For an activation $h_\ell(x;\theta)$ and a weight perturbation $\delta\theta$,
\begin{equation}
h_\ell(x;\theta+\delta\theta)-h_\ell(x;\theta)\;\approx\;D_\theta h_\ell(x;\theta)\,\delta\theta,
\label{eq:pushforward}
\end{equation}
the pushforward of the weight move into activation space. If $\delta\theta$ improves the task, then its example-averaged activation shadow $v_\ell=\E_x[D_\theta h_\ell(x;\theta)\,\delta\theta]$ is a candidate steering vector with the same first-order effect on the loss. 

Whether that shadow still helps when reinjected into the clean model on \emph{independent} examples is an empirical question; we answer it for one gradient-step case (tight match) and one random-search case (partial transport) in Section~\ref{sec:steering}, where the effect is concentrated at the late decoder layers (around $\ell=19$--$20$ on Qwen2.5-0.5B).

\section{Experiments}
\label{sec:experiments}

In this section, we present our experimental evaluation on three scales: a controlled synthetic Transformer, where we can see the whole geometry directly (\S\ref{sec:synthetic_lora}), LoRA adapters on two real pretrained backbones (\S\ref{sec:lora_validation}), and billion-parameter LLMs (\S\ref{sec:steering}). 

\subsection{Synthetic Transformer: local structure, but no fixed plane}
\label{sec:synthetic_lora}

We begin with a setting we can fully control: a $4$-layer, $128$-dimensional, $2$-head causal Transformer ($\sim$$5\!\cdot\!10^5$ parameters) trained on four procedurally generated digit-sequence tasks, namely \task{copy}, \task{reverse}, \task{sort}, and \task{mod-add}. Multitask pretraining runs for $20\,000$ steps; single-task continuation runs for an additional $10\,000$ steps to induce forgetting; recovery runs up to $100$ steps on the original mixture. We use three seeds throughout. Table~\ref{tab:synth_summary} shows the geometry and recovery numbers and Table~\ref{tab:synth_search} the random-search ordering. We can draw the following conclusions.

\begin{table}[!t]
\centering\footnotesize
\begin{minipage}[t]{0.52\linewidth}
\centering
\caption{Synthetic Transformer: local geometry, forgetting, recovery; $n=3$ seeds, \task{copy}$\to$\task{sort} recovery.}
\label{tab:synth_summary}
\setlength{\tabcolsep}{4pt}
\begin{tabular}{lc}
\toprule
\textbf{Quantity} & \textbf{Value} \\
\midrule
Per-task gradient eff.\ rank $r_{90}$ & $2$--$8$ \\
Cross-task overlap (top-$r$, $6$ pairs) & $0.11$ \,[$0.08$, $0.13$] \\
\quad random-subspace baseline & $\sim$$6\!\cdot\!10^{-5}$ \\
Worst other-task EM drop, forgetting & $0.56$--$0.95$ \\
Recovery EM, full descent & $\mathbf{0.30}$ \\
\quad projected, local task plane & $0.10$ \\
\quad projected, pre-int.\ plane & $0.07$ \\
Local-plane drift, step $0\!\to\!100$ & $1.0 \to 0.12$--$0.17$ \\
\bottomrule
\end{tabular}
\end{minipage}\hfill
\begin{minipage}[t]{0.45\linewidth}
\centering
\caption{Synthetic random search on the forgotten \task{sort} task: best-of-$N$ exact-match and gain over the forgotten checkpoint ($n=3$ seeds).}
\label{tab:synth_search}
\setlength{\tabcolsep}{4pt}
\begin{tabular}{lcc}
\toprule
Subspace & EM & gain \\
\midrule
Trajectory prefix      & $\mathbf{0.152}$ & $\mathbf{0.090}$ \\
Task plane (local)     & $0.120$ & $0.059$ \\
Task plane (pre-int.)  & $0.106$ & $0.032$ \\
Isotropic              & $0.055$ & $0.003$ \\
Random rank-matched    & $0.045$ & $0.002$ \\
\bottomrule
\end{tabular}
\end{minipage}
\end{table}

\paragraph{Task gradients are low-rank, and the tasks are not orthogonal.}
\label{sec:synthetic:lowrank}
At the multitask checkpoint, each task's gradient covariance is low-rank, with effective rank between $2$ and $8$, a tiny fraction of the parameters (Table~\ref{tab:synth_summary}, row~1). The tasks are also not randomly oriented relative to each other: the weighted overlap between their top-$r$ subspaces averages $0.11$ across the six task pairs, four orders of magnitude above the random-subspace baseline. So there is genuine low-dimensional task structure, and it is shared across tasks more than chance would predict.

\paragraph{Low-rank structure does not prevent forgetting.}
Low-rank, nearly-orthogonal task subspaces might suggest that the tasks should not interfere. In practice, we see that they do anyway: after single-task continuation, the worst other-task accuracy drops by between $0.56$ and $0.95$ (Table~\ref{tab:synth_summary}), even for pairs whose subspaces overlap below $0.10$. Even if two tasks live in nearly orthogonal subspaces, this fact does not protect one from catastrophic interference by the other.

\paragraph{The recovery subspace moves, and a static plane cannot recover the task.}
\label{sec:synthetic:static}
This is the central negative result. We compare three ways of running recovery at matched step size: full gradient descent, descent with the gradient \emph{re-projected at every step} onto the local task subspace at the forgotten checkpoint, and the same projected onto the pre-forgetting (clean) task subspace. Table~\ref{tab:synth_summary} shows that full recovery roughly quadruples the exact-match accuracy that either static-plane projection reaches: the task plane measured at the very checkpoint where the task was learned captures less than a third of what unconstrained recovery achieves. 

Note that in this experiment, ``projected recovery'' re-projects the gradient at \emph{every} step; this is not the same as projecting the final unconstrained displacement, because constraining the path changes the path. Moreover, the task plane itself moves: the top-$r$ subspace at step $t$ has principal-angle overlap with the step-$0$ subspace that falls from $1.0$ to about $0.12$--$0.17$ over $100$ steps, while staying well above random sampling. Here both subspaces are top-$r$ subspaces for the same fixed $r$ (by construction), so this is a drift in \emph{orientation} --- where the $r$-dimensional plane points --- not a change in its dimension.
The two facts are consistent: the static plane misses recovery (performance) precisely because the relevant plane is moving (geometry).

\paragraph{Random search prefers the trajectory, exactly as the theory predicts.}
\label{sec:synthetic:randopt}
Finally, we use random search as a probe. Drawing norm-matched perturbations inside each candidate subspace on the forgotten \task{sort} task, the ordering is clean and reproduces across seeds (Table~\ref{tab:synth_search}): searching inside the first $\sim$$10$ recovery steps beats searching inside the local task plane, which is better than the pre-forgetting plane, and all of them are far better than isotropic and random controls.

There are two details here that follow from Proposition~\ref{prop:aligned_search}. First, the trajectory prefix wins even though its rank is smaller than the task planes'. Second, concatenating the task plane onto the trajectory (rank $48$) does \emph{not} help despite containing it, because the extra directions are mostly orthogonal to the recovery signal and dilute it. The ensemble and pass@$N$ variants follow the same ordering (Appendix~\ref{app:synthetic}).

\begin{table}[!t]
\centering
\footnotesize
\caption{Per-layer activation pushforward of trajectory-family parameter deltas on the synthetic Transformer, averaged across seeds at recovery step $T=100$. Cosine and projection mass are measured against the gradient-recovery shift at the same layer; the participation ratio (PR) and rank-$8$ explained variance measure how concentrated the per-example shift is.}
\label{tab:per_layer_pushforward}
\input{tables/table_per_layer.tex}
\end{table}

\paragraph{The activation shadow of the right perturbation aligns with the gradient.}
\label{sec:synthetic:activation}
As a foreshadowing of the LLM-scale steering results, we check the pushforward identity here. For trajectory-family weight perturbations, we push each one forward to activation space by equation~\eqref{eq:pushforward} and compare the per-layer activation shift to the shift produced by full gradient recovery (Table~\ref{tab:per_layer_pushforward}). The two are closely aligned at every layer --- cosine $\ge0.94$ from the embedding through the last block --- and the shift is concentrated in a low-dimensional subspace at each layer. For non-trajectory perturbations the same cosine is significantly lower. In other words, the activation shadow of a \emph{useful} weight move already points where the gradient points, and we will exploit this property below for steering.

\subsection{LoRA validation on DistilGPT-2 and GPT-2}
\label{sec:lora_validation}

The synthetic picture is clean and conforms to our expectations, but it is only one architecture with toy tasks. In this part, we repeat the probes on LoRA adapters fine-tuned over two real pretrained backbones: DistilGPT-2 with rank-$16$ adapters on the \task{sort}$\to$\task{mod-add} pair, and GPT-2 with rank-$8$ adapters on \task{reverse}$\to$\task{sort}, three seeds each, six runs in total. The recovery loop is the same. We will see that the qualitative conclusions transfer successfully, and one of them becomes our cleanest quantitative statement.

\paragraph{Low-rank geometry.}
\label{sec:lora:geometry}
Across the six runs, LoRA task-gradient covariances have effective rank between $3$ and $8$, and cross-task overlaps of a few percent, which is three to four orders of magnitude above random subspaces. A parameter-scope ablation (attention-only, MLP-only, or both) leaves this unchanged. The real backbone adapters reproduce the synthetic geometry faithfully.

\paragraph{The trajectory prefix captures the recovery, and that predicts search quality.}
\label{sec:lora:projection}
Here we get a clean empirical result about projection mass: how much of the recovery displacement $\Delta_{\rm GD}$ each candidate subspace contains (Figure~\ref{fig:projection_mass}a). A basis built from the first $10$ recovery steps captures about $77\%$ of $\Delta_{\rm GD}$; the first $20$ steps capture $\sim$$90\%$; the full trajectory captures all of it by construction. The static task planes capture far less, only about $15\%$ for the local plane and under $2\%$ for the pre-forgetting plane. This supports our hypothesis that the recoverable object follows the trajectory rather than stays on a fixed plane, and it holds on both backbones.

Importantly, projection mass combined with Proposition~\ref{prop:aligned_search} can {predict} how well random search will do inside each subspace. The per-dimension signal a matched-norm search extracts is $\norm{P_U\Delta_{\rm GD}}/\sqrt r$, so a subspace's mass and rank together predict its search ranking (Figure~\ref{fig:projection_mass}b); next, we verify that forecast.

\begin{figure}[!t]
\centering\setlength{\tabcolsep}{3pt}\footnotesize
\begin{tabular}{P{.48\linewidth}P{.48\linewidth}}
\multicolumn{2}{@{}c@{}}{\includegraphics[width=\linewidth]{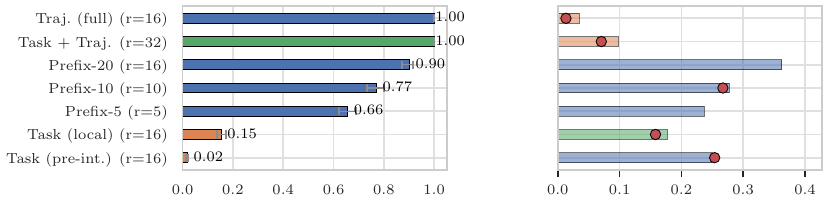}} \\[-2pt]
{(a) Projection mass $\norm{P_U \Delta_{\rm GD}}^2 / \norm{\Delta_{\rm GD}}^2$} & {(b) Predicted (bar) and observed (dot) signal density $\norm{P_U \Delta_{\rm GD}} / \sqrt r$} \\
\end{tabular}
\caption{Trajectory prefixes capture most of the LoRA recovery displacement, and that predicts search quality: (a)~how much of $\Delta_{\rm GD}$ each candidate subspace contains; (b)~the per-dimension signal density predicted by Proposition~\ref{prop:aligned_search} (bars) against observed best-of-$N$ gain (dots).}
\label{fig:projection_mass}
\end{figure}

\paragraph{Random search confirms the ordering, and the three scores disagree informatively.}
\label{sec:lora:randopt}
We now search inside each candidate subspace at matched step size on all six runs, and report all three scores (Table~\ref{tab:lora_search}). The ordering predicted by projection mass and density holds: the trajectory prefix wins, the full trajectory is close behind, and the static planes are far worse.

Interestingly, the three scores that we measure tell slightly different stories, and the differences are exactly what the theory would predict.
The \emph{Best-of-$N$} metric (used in Theorem~\ref{thm:gaussian_search}) follows the density prediction closely: the rank-$10$ prefix ($0.165$) beats the rank-$16$ full trajectory ($0.134$) by about the $\sqrt{16/10}$ factor the proposition predicts, and dividing each gain by $\sqrt r$ makes the ordering cleanly monotone. \emph{Ensemble-$k$-of-$N$} is more forgiving to the larger subspace (prefix $0.191$ vs.\ full trajectory $0.182$, nearly tied): pooling many candidates dilutes the per-dimension density penalty, plausibly because the extra trajectory directions are correlated rather than independent, so the full trajectory's effective rank is below its nominal $16$. \emph{Pass@$N$} simply tracks how much of $\Delta_{\rm GD}$ each subspace contains. We draw our main conclusion from best-of-$N$, where the theory applies most directly, and report the other two as informative context.

\begin{table}[!t]
\centering
\footnotesize
\caption{LoRA random search inside each candidate subspace at matched step size ($n=6$ runs). All numbers are mean target gains except pass@$N$.}
\label{tab:lora_search}
\begin{tabular}{lccccc}
\toprule
Candidate subspace & rank $r$ & best-of-$N$ & best-of-$N/\sqrt r$ & ens-$k$-of-$N$ & pass@$N$ \\
\midrule
Trajectory prefix ($10$ steps) & $10$ & $\mathbf{0.165}$ & $\mathbf{0.052}$ & $\mathbf{0.191}$ & $\mathbf{0.552}$ \\
Trajectory (full)              & $16$ & $0.134$ & $0.034$ & $0.182$ & $0.496$ \\
Task plane $+$ trajectory       & $32$ & $0.055$ & $0.010$ & $0.091$ & $0.396$ \\
Task plane (local)              & $16$ & $0.023$ & $0.006$ & $0.039$ & $0.270$ \\
\bottomrule
\end{tabular}
\end{table}

The conclusion we can draw from this is mostly about {where good weight moves live}.
Across these adapters, the directions that recover a forgotten task concentrate in a small subspace aligned with the early recovery trajectory, and \emph{not} in the static task plane. A random search confirms this by finding usable moves far more often inside the former than the latter. This is consistent with the moving-trajectory picture of Section~\ref{sec:theory:krylov}. We are cautious, however, about over-generalizing from two task pairs (see the ``Limitations'' section).

\paragraph{A closer look: successful perturbations mostly point where the gradient does.}
\label{sec:lora:scatter}
It helps to look below the family averages at individual candidates (Figure~\ref{fig:lora_signal_scatter}). Two things are visible. 

First, perturbations that improve the task are largely the ones aligned with the gradient descent recovery direction: a candidate's cosine with $\Delta_{\rm GD}$ correlates with its observed gain ($r\approx0.52$), and the search occasionally does a little better than a single gradient step by finding a more effective {scale} along roughly the same direction. This means that descent and search are mostly consistent.

Second, a candidate's overlap with the {static local task plane} does \emph{not} predict its success; if anything it anti-correlates ($r\approx-0.40$), because that plane has the recovery signal projected mostly out of it. 
Again, we see that useful moves live along the moving trajectory, and reading them off the static plane would be misleading. A separate, more artificial probe---forcing gradient descent to stay inside each candidate subspace---is reported in Appendix~\ref{app:projected}; we do not draw geometric conclusions from it, since constraining descent to a subspace the gradient has already left is expected to behave erratically.

\begin{figure}[!t]
\centering
\includegraphics[width=\linewidth]{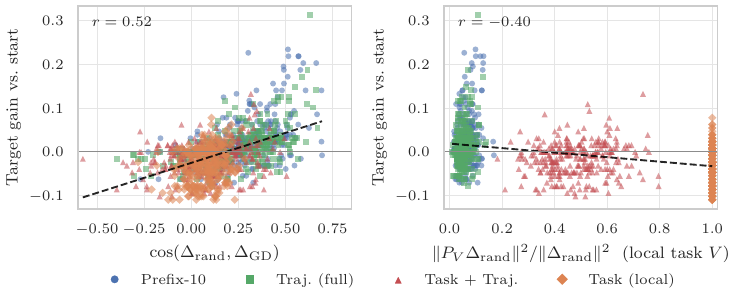}
\caption{{Per-candidate view ($\sim$$1.2$k candidates pooled over $6$ runs)}: (a)~alignment with the gradient recovery direction predicts gain ($r\approx0.52$);
(b)~overlap with the static local task plane does {not} predict gain ($r\approx-0.40$).}
\label{fig:lora_signal_scatter}
\end{figure}

\paragraph{A practical aside.}
\label{sec:lora:recipes}
The moving plane hypothesis makes a specific prediction about multitask fine-tuning. If you train one task to convergence and only then add another, the first task's local plane will have drifted by the time you come back, and re-recovering it has to traverse directions the original plane no longer contains, so \emph{sequential} specialisation should be brittle. Our experiments, shown in Figure~\ref{fig:lora_recipes} and Table~\ref{tab:lora_recipes_appendix}, confirm that it is: on both backbones, schedules that update the tasks \emph{together}---simple gradient-sum, alternating, or PCGrad-lite~\citep{yu2020gradient}---keep worst-task accuracy an order of magnitude higher than schedules that finish one task before starting the next. We are not proposing a new optimizer, and in our experiments simultaneous schedules cluster within a couple of points of each other with no systematic winner. But it is clear that sequential specialization should not be.

\begin{figure}[!t]
\centering
\centering\setlength{\tabcolsep}{3pt}\small
\begin{tabular}{P{.47\linewidth}P{.47\linewidth}}
\multicolumn{2}{c}{\includegraphics[width=\linewidth]{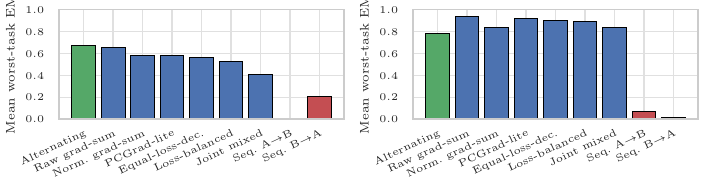}} \\
{ (a) DistilGPT-2 \task{sort}$\to$\task{mod-add}} & { (b) GPT-2 \task{reverse}$\to$\task{sort}} \\
\end{tabular}
\caption{Sequential specialization is brittle, simultaneous schedules are robust: mean worst-task exact-match across three seeds per backbone for nine multitask schedules.}
\label{fig:lora_recipes}
\end{figure}

\begin{table}[!t]
\centering\setlength{\tabcolsep}{3pt}\small
\caption{Mean worst-task EM, $n=3$ seeds.}
\label{tab:lora_recipes_appendix}
\begin{tabular}{lcc}
\toprule
\textbf{Schedule} & \shortstack{D-GPT2\\\task{sort}/\task{m-add}} & \shortstack{GPT2\\\task{rev}/\task{sort}} \\
\midrule
Raw grad-sum        & $0.656$          & $\mathbf{0.938}$ \\
Alternating         & $\mathbf{0.677}$ & $0.779$ \\
PCGrad-lite         & $0.583$          & $0.922$ \\
Equal-loss-dec.     & $0.565$          & $0.904$ \\
Norm.\ grad-sum     & $0.586$          & $0.841$ \\
Loss-balanced       & $0.531$          & $0.891$ \\
Joint mixed         & $0.406$          & $0.836$ \\
Seq.\ A$\to$B       & $0.000$          & $0.068$ \\
Seq.\ B$\to$A       & $0.208$          & $0.018$ \\
\bottomrule
\end{tabular}
\end{table}

\subsection{LLM scale}
\label{sec:steering}

The synthetic and LoRA results establish the picture up to $\sim$$10^6$ trainable parameters. Scaling to billion-parameter models, we ask three questions in turn:
does the low-rank gradient geometry survive; does budgeted random search actually find improvements, and in what regime; and does the weight-to-activation bridge produce usable steering vectors? Below, we give answers to these questions

\paragraph{The low-rank picture survives only partly.}
\label{sec:steering:lowrank}
On Qwen2.5-0.5B we estimate per-task gradient covariances through a random sketch and read off effective ranks; they come out small (single digits to low teens) and task-dependent. But a small effective rank from only a few dozen minibatches in a $D\gg n$ space can be a sampling artifact rather than a property of the task, so we run a control experiment: replace each gradient by a same-norm Gaussian vector and re-estimate the rank under the same pipeline. The result is mixed, and we report it in Figure~\ref{fig:qwen_gradient_rank}. On Qwen-3B with the MLP scope, the measured rank is clearly below the random-sketch control, but 
on Qwen-7B the measured and control ranks essentially coincide, and the apparent low rank is indistinguishable from sampling noise at this sample size.
So the clean low-rank result is established on the small and LoRA models, but only partially survives at billion-parameter scale.

\begin{figure}[!t]
\centering
\includegraphics[width=.8\linewidth]{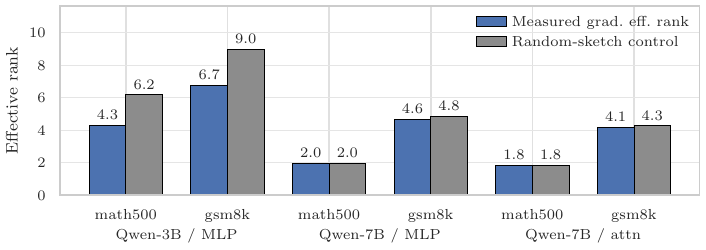}
\caption{{Measured gradient rank vs.\ a random-sketch control on Qwen.}} 
\label{fig:qwen_gradient_rank}
\end{figure}

\paragraph{Gaussian search finds improvements, in the regime the theorem predicts.}
\label{sec:steering:search}
Theorem~\ref{thm:gaussian_search} only applies where the loss is roughly linear in the perturbation. We locate that regime directly with a symmetric finite-difference probe that separates the first-order part of a perturbation's effect from the curvature part. We define 
$$
\begin{array}{ll}
F_1=\tfrac12[L(\theta+\sigma\delta)-L(\theta-\sigma\delta)] & \text{(antisymmetric, linear part),}\\
F_2=\tfrac12[L(\theta+\sigma\delta)+L(\theta-\sigma\delta)-2L(\theta)] & \text{(symmetric, curvature part),}
\end{array}
$$
and sweep the scale $\sigma$. On Qwen2.5-0.5B the linear part dominates around $\sigma\approx10^{-4}$, and 
above it curvature takes over and random moves become destructive (Figure~\ref{fig:f1f2_scatter}). The same sweep shows the regime is task-dependent: BoolQ and HellaSwag have a healthy band of linear and convex directions, while ARC-Easy is dominated by concave directions at every scale, which is the case where no fixed direction is preferred and random search degenerates into noise.

\begin{figure}[!t]
\centering
\includegraphics[width=\linewidth]{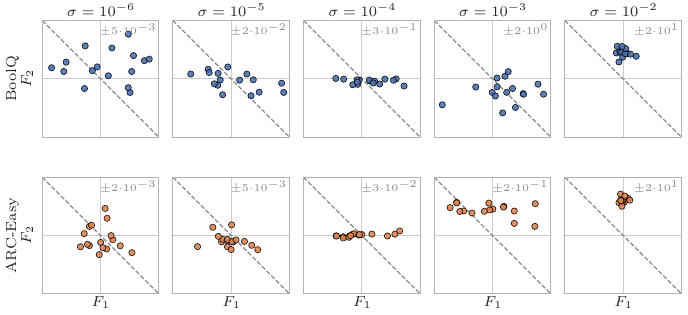}
\caption{{Locating the linear regime on Qwen2.5-0.5B.} Each panel shows the first-order ($F_1$) vs.\ curvature ($F_2$) decomposition of random perturbations at a given scale. The linear part dominates around $\sigma\approx10^{-4}$; at larger scales curvature dominates and every direction degrades the loss.}
\label{fig:f1f2_scatter}
\end{figure}

Inside the predicted window the theorem's result is confirmed: best-of-$N$ random parameter search at $\sigma\approx10^{-4}$ produces task-improving perturbations whose ensembled accuracy on a held-out BoolQ evaluation reaches roughly $+10$ points, with damage to the other tasks staying small.
 When the regime assumption fails (too small scale, or a curvature-dominated task like ARC-Easy), the same procedure yields nothing useful. The theorem and the scale window together indicate {when} random search around a large model should work, and the experiment agrees.

\paragraph{From a useful weight perturbation to a steering vector.}
\label{sec:steering:transport}
The pushforward identity~\eqref{eq:pushforward} says that a useful weight perturbation has an activation-space shadow that should itself steer behaviour. We test this with two experiments.

\begin{figure}[!t]
\centering
\includegraphics[width=\linewidth]{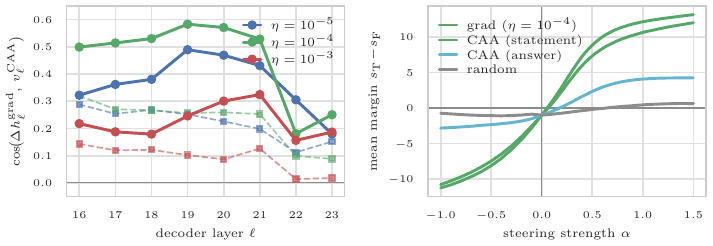}
\caption{{A single gradient step yields a CAA-like steering vector at late layers (Qwen2.5-0.5B)}: (a)~cosine between the gradient-induced activation shift and the contrastive CAA vector across layers;
(b)~steering response.}
\label{fig:gradient_steering}
\end{figure}

\emph{The clean version: one gradient step is a steering vector.}
First, we use the gradient itself. On Qwen2.5-0.5B with a true/false statement format, we take one gradient step, push the resulting weight change forward to each decoder layer's activations, and compare it to a contrastive (true-minus-false) CAA steering vector built independently from labelled prompts (the two routes are sketched in Figure~\ref{fig:clean_steering}). The two align well at late layers: the cosine rises through the mid-decoder block and peaks around $0.58$ at layers $19$--$20$, which are the same layers steering vectors are usually drawn from, and only at the intermediate learning rate $10^{-4}$ that the $F_1/F_2$ probe flagged as linear. At $10^{-3}$, the step leaves the linear regime and the alignment collapses. Reinjecting the gradient-induced shift into a clean model produces a steering response curve that closely tracks the CAA vector's and improves held-out accuracy by up to $+10$ points (Figure~\ref{fig:gradient_steering}). The model behaves nearly the same whether the steering vector came from $80$ labelled contrast pairs or from a single gradient step with no labels at all. This is one model and one task, and we do not claim the two vectors are identical (though they are linked by our identity), but the match is strong.

\begin{figure}[!t]
\centering
\begin{tikzpicture}[
  font=\small,
  >={Latex[length=2.4mm,width=1.8mm]},
  node distance=6mm,
  box/.style={rounded corners=3pt, draw, line width=0.7pt, align=center,
              inner sep=4pt, minimum height=15mm},
  blue/.style ={box, fill=cblue,  draw=dblue},
  amber/.style={box, fill=camber, draw=damber},
  green/.style={box, fill=cgreen, draw=dgreen},
  flow/.style={->, line width=0.9pt, draw=black!75},
]
\def\colA{0}    \def\colB{4.3}   \def\colC{8.9}
\def\rowT{0}    \def\rowM{-2.2}   \def\rowMid{-1.1}

\node[blue, text width=27mm] (grad) at (\colA,\rowT)
      {\textbf{Gradient step}\\[2pt]$\delta\theta=-\eta\nabla L$};
\node[blue, text width=41mm] (push) at (\colB,\rowT)
      {\textbf{Pushforward}\\[2pt]$v_\ell^{\mathrm{grad}}=\E_x[D_\theta h_\ell\,\delta\theta]$};

\node[amber, text width=27mm] (contrast) at (\colA,\rowM)
      {\textbf{Labelled}\\\textbf{true/false prompts}};
\node[amber, text width=41mm] (caa) at (\colB,\rowM)
      {\textbf{CAA vector}\\[2pt]$v_\ell^{\mathrm{CAA}}=\bar h^{+}-\bar h^{-}$};

\node[green, text width=33mm, minimum height=27mm] (cmp) at (\colC,\rowMid)
      {\textbf{Compare \& inject}\\[3pt]cosine of $v_\ell^{\mathrm{grad}},v_\ell^{\mathrm{CAA}}$;\\inject $\alpha v_\ell$ at layer $\ell$;\\measure response \& accuracy};

\draw[flow] (grad) -- (push);
\draw[flow] (contrast) -- (caa);
\draw[flow] (push.east) -- (cmp.west |- push.east);
\draw[flow] (caa.east)  -- (cmp.west |- caa.east);
\end{tikzpicture}
\caption{{The clean (deterministic) route to a steering vector.} Two independent constructions --- a single gradient step pushed forward to activations, and a labelled-contrast CAA vector $\bar h^{+}-\bar h^{-}$ --- are compared by cosine and injected into the unperturbed model. They agree at the late decoder layers; the resulting alignment and steering response are reported in Figure~\ref{fig:gradient_steering}. Compare with Fig.~\ref{fig:steering_pipeline}.}
\label{fig:clean_steering}
\end{figure}

\begin{figure}[!t]
\centering
\begin{tikzpicture}[
  font=\small,
  >={Latex[length=2.4mm,width=1.8mm]},
  node distance=6mm,
  box/.style={rounded corners=3pt, draw, line width=0.7pt, align=center,
              inner sep=4pt, minimum height=15mm},
  blue/.style ={box, fill=cblue,  draw=dblue},
  amber/.style={box, fill=camber, draw=damber},
  green/.style={box, fill=cgreen, draw=dgreen},
  purple/.style={box, fill=cpurple, draw=dpurple},
  flow/.style={->, line width=0.9pt, draw=black!75},
  alabel/.style={font=\footnotesize, inner sep=1.5pt, fill=white, rounded corners=1pt},
]

\def\colA{0}    \def\colB{3.8}   \def\colC{7.6}   \def\colD{12.3}
\def\rowT{0}    \def\rowM{-2.2}   \def\rowB{-4.4}
\def\rowMid{-1.1}  

\node[blue,  text width=26mm] (ssel) at (\colA,\rowT)
      {\textbf{Search split}\\[1pt]$\mathcal{S}_{\mathrm{sel}}$};
\node[amber, text width=26mm] (svec) at (\colA,\rowM)
      {\textbf{Hidden-state split}\\[1pt]$\mathcal{S}_{\mathrm{vec}}$};
\node[green, text width=26mm] (srep) at (\colA,\rowB)
      {\textbf{Report split}\\[1pt]$\mathcal{S}_{\mathrm{rep}}$ \footnotesize(held out)};

\node[blue,  text width=37mm] (cand) at (\colB,\rowT)
      {\textbf{Candidate\\ perturbations}\\[2pt]sample $\delta\sim\mathcal{N}(0,\sigma^2 I)$;\\keep best-of-$N$ on $\mathcal{S}_{\mathrm{sel}}$};
\node[amber, text width=37mm] (steer) at (\colB,\rowM)
      {\textbf{Steering vector}\\[2pt]average activation shift\\$v_\ell=\overline{\Delta h_\ell}$ of kept $\delta$};

\node[purple, text width=21mm, minimum height=27mm] (sel) at (\colC,\rowMid)
      {\textbf{Operating-point}\\\textbf{selection}\\[3pt]sweep $\ell,\alpha$;\\score on $\mathcal{S}_{\mathrm{sel}}$};

\node[green, text width=29mm, minimum height=27mm] (fin) at (\colD,\rowMid)
      {\textbf{Final report}\\[3pt]inject $\alpha^{\star}v^{\star}$ at $\ell^{\star}$;\\score on $\mathcal{S}_{\mathrm{rep}}$};

\draw[flow] (ssel) -- (cand);
\draw[flow] (svec) -- (steer);
\draw[flow] (cand) -- (steer);                                  
\draw[flow] (steer.east) -- (sel.west |- steer.east);
\draw[flow] (sel) -- node[alabel, above=0.3mm] {$(\ell^{\star},\alpha^{\star},v^{\star})$} (fin);

\draw[flow] (srep.east) -| ($(fin.south)+(0,-0.0)$);

\end{tikzpicture}
\caption{{The perturbation-to-steering pipeline (random-search version).} The three splits are disjoint subsets of the evaluation examples: on the search split $\mathcal{S}_{\mathrm{sel}}$ we keep the best-of-$N$ random weight perturbations; on the hidden-state split $\mathcal{S}_{\mathrm{vec}}$ we average their activation shadow into a steering vector; the operating point (layer, scale) is chosen on $\mathcal{S}_{\mathrm{sel}}$; and the held-out report split $\mathcal{S}_{\mathrm{rep}}$ is used only for the final numbers. Compare with Fig.~\ref{fig:clean_steering}.}
\label{fig:steering_pipeline}
\end{figure}

\emph{The noisy version: selected random perturbations, across models.}
The random-search counterpart replaces the gradient step with a best-of-$N$ selected Gaussian weight perturbation. The pipeline (Figure~\ref{fig:steering_pipeline}) uses three disjoint subsets of the evaluation examples: on the \emph{search split} $\mathcal{S}_{\mathrm{sel}}$ we sample weight perturbations $\delta\sim\mathcal{N}(0,\sigma^2 I)$ and keep the best-of-$N$ that improve the target; on the \emph{hidden-state split} $\mathcal{S}_{\mathrm{vec}}$ we average the activation shifts those kept perturbations induce into a steering vector $v_\ell$; we then choose the operating point --- the decoder layer (swept over the mid-to-late layers, around $\ell\in\{15,\dots,20\}$) and the injection scale $\alpha$ --- on $\mathcal{S}_{\mathrm{sel}}$; and we report on the held-out \emph{report split} $\mathcal{S}_{\mathrm{rep}}$. We run this across $14$ (model, task) variants spanning Qwen2.5 at $0.5$B / $3$B / $7$B and OLMo-7B (full numbers in Table~\ref{tab:multimodel_main}). As a control we compare against \emph{matched random steering}: a random hidden-state direction injected at the \emph{same layer} and rescaled to the \emph{same norm} as the structured vector --- a like-for-like steering baseline, not a test of the theorem (which concerns weight-space search rather than activation injection). Structured transport beats this control on $9$ of the $14$ variants, and the Qwen BoolQ effect survives at all three scales. It is not free or universal; in particular, the OLMo BoolQ lane buys a $+1.2$-point target gain at the cost of $-17$ points on the other tasks. In summary, the bridge is real and often useful, but task- and architecture-dependent.

\begin{table}[!t]
\centering
\small
\caption{Multimodel perturbation-to-steering ($n=14$ lanes). \emph{Base} is the unperturbed report-split accuracy; all other columns are accuracy deltas (points) on the held-out report split. \emph{Param best} is the best target gain from a kept weight perturbation itself (the candidate, before transport to a steering vector); \emph{Steer best} is the best gain from the steering vector built from those perturbations, at the operating point chosen on the selection split; \emph{Side mean / worst} are the average and worst side-task deltas there; \emph{Random best} is the best gain from a random hidden-state direction injected at the same layer and matched to the same norm (a like-for-like control).}
\label{tab:multimodel_main}
\setlength{\tabcolsep}{4pt}
\begin{tabular}{llccccc}
\toprule
Model & Task & Base & Param best & Steer best & Side mean / worst & Random best \\
\midrule
Qwen2.5-0.5B & BoolQ      & $59.0$ & $+4.7$ & $\mathbf{+10.7}$ & $-4.7$ / $-8.4$  & $+5.7$ \\
Qwen2.5-0.5B & ARC-Easy   & $69.3$ & $+1.6$ & $+4.1$           & $-2.0$ / $-8.6$  & $+0.6$ \\
Qwen2.5-0.5B & HellaSwag  & $42.0$ & $+2.3$ & $+2.7$           & $-2.6$ / $-15.8$ & $-6.6$ \\
Qwen2.5-3B   & BoolQ (str.)& $72.3$ & $+3.1$ & $\mathbf{+6.2}$  & $-0.2$ / $-0.4$  & $+0.0$ \\
Qwen2.5-3B   & BoolQ (lite)& $72.7$ & $+0.0$ & $+5.5$           & $-0.1$ / $-0.4$  & $+0.8$ \\
Qwen2.5-3B   & ARC-Easy   & $92.6$ & $+0.0$ & $+0.8$           & $+0.9$ / $-0.4$  & $-1.2$ \\
Qwen2.5-3B   & HellaSwag  & $68.8$ & $+0.0$ & $+1.6$           & $+0.1$ / $+0.0$  & $+0.4$ \\
Qwen2.5-7B   & BoolQ (lite)& $84.8$ & $+0.0$ & $+3.1$          & $-1.0$ / $-1.2$  & $+0.0$ \\
Qwen2.5-7B   & BoolQ (med.)& $84.9$ & $+0.0$ & $+2.1$          & $-0.4$ / $-0.5$  & $+0.3$ \\
Qwen2.5-7B   & ARC-Easy   & $96.9$ & $+1.6$ & $+0.0$           & $+0.6$ / $-0.4$  & $-0.8$ \\
Qwen2.5-7B   & HellaSwag  & $73.0$ & $+1.6$ & $\mathbf{+3.1}$  & $-0.2$ / $-0.8$  & $-0.8$ \\
OLMo-7B      & BoolQ      & $77.0$ & $+2.1$ & $+1.2$           & $\mathbf{-17.0}$ / $-21.1$ & $-3.9$ \\
OLMo-7B      & ARC-Easy   & $79.3$ & $+0.0$ & $+2.3$           & $+0.4$ / $-0.4$  & $+0.0$ \\
OLMo-7B      & HellaSwag  & $53.1$ & $+2.1$ & $+0.4$           & $+0.0$ / $+0.0$  & $+0.0$ \\
\bottomrule
\end{tabular}
\end{table}

\section{Discussion and Limitations}
\label{sec:discussion}

\paragraph{Empirical conclusions.}
Our final conclusion is neither ``there is no linear structure'' nor ``there is a single global task vector''. Trained tasks induce \emph{local} low-dimensional structure whose meaning depends on where and how you look. At a single checkpoint, task gradients concentrate in a small subspace that is a good basis for a single descent step. Along a trajectory, the recovery displacement lives in a Krylov subspace well approximated by the early trajectory, and that is where a finite random search should look. Under random search, useful directions live in a scale window between too-small (signal below noise) and too-large (curvature dominates), and inside that window the gains are dimension-independent and focused on the target. In activation space, the shadow of a single gradient step closely tracks a contrastive steering vector at late layers, at least in the cases we tested. The unifying statement is that linear structure in trained networks is real but local: it moves as training proceeds, it is scale-dependent, and it looks different in weight space and activation space.

\paragraph{Limitations.}
The most important limitations of our study as as follows.
\begin{enumerate}
\item \emph{Scale of the evidence}: the cleanest results are on a $\sim$$5\!\cdot\!10^5$-parameter synthetic Transformer and on small LoRA adapters.
\item \emph{The LLM low-rank claim is partial}: at billion-parameter scale, a low effective rank separates from a sampling-noise control only in some settings (Qwen-3B / MLP), not in all; a larger, better controlled replication is a natural future work direction.
\item \emph{Few task pairs}: the trajectory-vs-plane comparison rests on one synthetic family and two LoRA task pairs; the ordering is consistent across them, but we are cautious about generalizing the exact magnitudes.
\item \emph{The trajectory prefix needs a trajectory}: while it shows where good moves live, we cannot turn it into an algorithm that would help in practice because building it requires already running the recovery.
\end{enumerate}

\section{Conclusion}
\label{sec:conclusion}

Modern language models exhibit many interesting phenomena tied to the internal geometry of their representations and their optimization dynamics. Understanding these structures matters for model safety, controllable adaptation, interpretability, and efficient training. Recent work has highlighted the surprising effectiveness of linear methods acting either in weight space or in activation space, and several explanations for these phenomena have been proposed, from mechanistic feature circuits to overcomplete latent representations motivating sparse autoencoders. In this work, we instead adopt a differential-geometric perspective that, in retrospect, appears natural: we treat small weight perturbations as infinitesimal displacements on the parameter manifold and study the local tangent structure induced by optimization dynamics.

From this viewpoint, many observed linear phenomena arise naturally. Neural networks are differentiable functions, and gradient-based training follows trajectories constrained by local geometry. Our results show that these trajectories are genuinely low-dimensional but not static: adaptation proceeds through evolving Krylov-like subspaces rather than fixed global task directions. This explains why local low-rank structure is simultaneously real, useful, and unstable under continued training. At a checkpoint, linear structures appear as local task-gradient subspaces; along training they form Krylov-like trajectory bundles; under random search they are scale-windowed Gaussian neighbourhoods; in activations they are the parameter-to-activation pushforward of those same gradient directions, exposed cleanly by a single gradient step at the linear-regime learning rate and noisily by best-of-$N$ random search.

The same perspective explains why random perturbation methods can work remarkably well even at LLM scale. In a locally linear regime, Gaussian random search efficiently identifies task-improving directions while producing diffuse side effects on orthogonal tasks. Our experiments further show that these local regimes do exist in practice, although their structure depends strongly on the task and the perturbation scale. We leave for future work a deeper question that this work raises but does not answer: why do fully trained checkpoints contain nearby perturbations capable of dramatically improving specific tasks at all? Understanding the origin of these highly task-selective local directions and when they emerge---during pretraining, instruction tuning, or as a property of the loss landscape itself---is an important direction.

Another contribution of this work is a unified view of several apparently different local adaptation methods: gradient updates, random parameter perturbations, and activation steering are connected through the local differential structure of the network. Successful parameter perturbations can induce reusable activation-space directions, and gradient-induced activation shifts can closely track contrastive activation-steering vectors in the locally linear regime, as we observed for Qwen-$0.5$B on BoolQ. A natural next step is to move beyond local tangent geometry towards a Lie-theoretic description of adaptation dynamics, with the geometry of local updates constrained by an underlying transformation group. We believe that connecting optimization, representation geometry, and local symmetry structure may yield a more unified theory of adaptation in large neural networks.

\bibliographystyle{plainnat}
\bibliography{references}

\clearpage

\appendix

\section{Synthetic transformer: additional results}
\label{app:synthetic}

The per-layer activation pushforward (Table~\ref{tab:per_layer_pushforward} in the main text) is the basis for the activation-shadow result. Here we note that the two alternative scorings of the synthetic random search reproduce the best-of-$N$ ordering of Table~\ref{tab:synth_search}. Scoring by the best \emph{iterative} (multi-step) recovery rather than the best single one-shot perturbation gives mean gains of $0.083$ (trajectory prefix), $0.046$ (local task plane), $0.022$ (pre-interference plane), and $\le 0.001$ for the isotropic and random controls --- the same ordering, with the trajectory prefix on top. The ensemble scoring behaves identically.

\section{LoRA: full search table and per-run breakdown}
\label{app:lora_full}

Table~\ref{tab:lora_perrun_appendix} gives the per-run ensemble and pass@$N$ numbers behind the aggregates of Table~\ref{tab:lora_search}. The family ordering (trajectory prefix or full trajectory on top, static planes at the bottom) reproduces in every one of the six runs; per run, the prefix and the full trajectory trade the top spot, which is why the aggregate ``prefix is best'' statement comes from averaging rather than from a clean sweep.

\begin{table}[!t]
\centering
\footnotesize
\caption{Per-run LoRA random search, ens-$k$-of-$N$ and pass@$N$ at $K=256$, $k=50$.}
\label{tab:lora_perrun_appendix}
\begin{tabular}{llcc}
\toprule
Run & Subspace & ens-$k$-of-$N$ EM & pass@$N$ \\
\midrule
DistilGPT-2 \task{sort}$\to$\task{mod-add}, s7  & Trajectory prefix  & $\mathbf{0.328}$ & $\mathbf{0.781}$ \\
DistilGPT-2 \task{sort}$\to$\task{mod-add}, s7  & Trajectory (full)  & $0.297$ & $0.688$ \\
DistilGPT-2 \task{sort}$\to$\task{mod-add}, s7  & Task plane (local) & $0.156$ & $0.383$ \\
DistilGPT-2 \task{sort}$\to$\task{mod-add}, s17 & Trajectory (full)  & $0.172$ & $0.414$ \\
DistilGPT-2 \task{sort}$\to$\task{mod-add}, s17 & Trajectory prefix  & $0.094$ & $0.398$ \\
DistilGPT-2 \task{sort}$\to$\task{mod-add}, s27 & Trajectory (full)  & $0.195$ & $0.523$ \\
DistilGPT-2 \task{sort}$\to$\task{mod-add}, s27 & Trajectory prefix  & $0.148$ & $0.531$ \\
GPT-2 \task{reverse}$\to$\task{sort}, s7        & Trajectory prefix  & $0.258$ & $0.531$ \\
GPT-2 \task{reverse}$\to$\task{sort}, s17       & Trajectory prefix  & $0.305$ & $0.516$ \\
GPT-2 \task{reverse}$\to$\task{sort}, s27       & Trajectory (full)  & $\mathbf{0.422}$ & $0.555$ \\
\bottomrule
\end{tabular}
\end{table}

\section{Projected gradient descent inside candidate subspaces}
\label{app:projected}

For completeness we report the more artificial probe alluded to in Section~\ref{sec:lora_validation}: forcing gradient descent to stay inside each candidate subspace for $50$ steps (Table~\ref{tab:projected}). The result is the mirror image of random search: the local task plane nearly matches unconstrained Full LoRA, while every trajectory prefix is weak --- and, notably, the different prefix lengths ($5$, $10$, $20$, full) are all about equally weak ($\sim$$0.012$), so there is no systematic ``more steps is worse'' effect, just a uniformly low value. We include this table only to document that constraining descent to a subspace the gradient has already drifted out of behaves erratically; we do \emph{not} use it to make geometric claims, since projected gradient descent is not a method anyone deploys and the constrained dynamics are not informative about where good \emph{finite} moves live.

\begin{table}[h]
\centering
\footnotesize
\caption{Projected gradient recovery for $50$ steps inside each candidate subspace ($n=6$). Mean gain over the forgotten-checkpoint baseline.}
\label{tab:projected}
\begin{tabular}{lccc}
\toprule
Subspace / method & mean gain & mean final EM & mass on $\Delta_{\rm GD}$ \\
\midrule
Full LoRA (unconstrained)   & $\mathbf{0.060}$ & $\mathbf{0.098}$ & $1.000$ \\
Task plane (local)          & $0.052$ & $0.090$ & $0.147$ \\
Trajectory prefix ($20$)    & $0.012$ & $0.049$ & $0.872$ \\
Trajectory prefix ($10$)    & $0.012$ & $0.049$ & $0.753$ \\
Trajectory (full)           & $0.012$ & $0.049$ & $1.000$ \\
Trajectory prefix ($5$)     & $0.010$ & $0.048$ & $0.622$ \\
Task plane (pre-int.)       & $0.009$ & $0.047$ & $0.023$ \\
Random rank-matched         & $0.000$ & $0.038$ & $0.000$ \\
\bottomrule
\end{tabular}
\end{table}

\section{Explicit Krylov basis as a curvature diagnostic}
\label{app:krylov}

We computed the explicit Krylov basis $\Span\{g,Hg,H^2g,H^3g\}$ at $\theta_{\rm forg}$ via symmetric finite-difference Hessian-vector products. Across the six runs it captures only $\sim$$11\%$ of the recovery displacement in projection mass, against $\sim$$77\%$ for the empirical $10$-step trajectory prefix. Its random-search ranking is above the gradient-alone and random controls but well below the trajectory prefix. Consistent with our remark to Lemma~\ref{lem:krylov}, the single-Hessian basis is a useful curvature \emph{diagnostic} but not a reconstruction of the recovery path, which the directly observed trajectory captures far better. We therefore use the trajectory prefix, not the explicit Krylov basis, as our working proxy throughout.

\end{document}

%% file: tables/table_per_layer.tex
\begin{tabular}{lcccc}
\toprule
Layer & $\cos(\delta h_\ell^{\rm evol}, \delta h_\ell^{\rm GD})$ & $\|P_{\rm GD}\delta h_\ell^{\rm evol}\|^2 / \|\delta h_\ell^{\rm evol}\|^2$ & Shift PR & Rank-8 EV (\%) \\
\midrule
Embedding & 0.941 & 0.886 & 1.7 & 98.4 \\
Block 0 & 0.970 & 0.941 & 2.9 & 94.3 \\
Block 1 & 0.962 & 0.925 & 7.2 & 82.2 \\
Block 2 & 0.975 & 0.951 & 9.6 & 76.1 \\
Block 3 & 0.985 & 0.970 & 2.4 & 88.3 \\
Final norm & 0.877 & 0.769 & 11.4 & 72.4 \\
\bottomrule
\end{tabular}